\newcommand{\comment}[1]{}
\newcommand{\deflen}[2]{%      
    \expandafter\newlength\csname #1\endcsname
    \expandafter\setlength\csname #1\endcsname{#2}%
}
\newtheorem{theorem}{Theorem}
\ifcvprfinal\pagestyle{empty}\fi
\begin{document}

%%%%%%%%% TITLE
\title{Deep Graph Laplacian Regularization for Robust Denoising of Real Images}

\author{Jin Zeng$^{1*}$\hspace{35pt}Jiahao Pang$^1$\thanks{Both authors contributed equally to this work. Jiahao Pang is the corresponding author.}\hspace{35pt}Wenxiu Sun$^1$\hspace{35pt}Gene Cheung$^2$\\
$^1$SenseTime Research\hspace{30pt}$^2$Department of EECS, York University\\
% Institution1 address\\
{\tt\small \{zengjin,\hspace{3pt}pangjiahao,\hspace{3pt}sunwenxiu\}@sensetime.com, genec@yorku.ca}
}

\maketitle
\thispagestyle{empty}

%%%%%%%%% ABSTRACT
\begin{abstract}
Recent developments in deep learning have revolutionized the paradigm of image restoration. 
However, its applications on real image denoising are still limited, due to its sensitivity to training data and the complex nature of real image noise.
In this work, we combine the robustness merit of model-based approaches and the learning power of data-driven approaches for real image denoising.
Specifically, by integrating graph Laplacian regularization as a trainable module into a deep learning framework, we are less susceptible to overfitting than pure CNN-based approaches, achieving higher robustness to small datasets and cross-domain denoising.
First, a sparse neighborhood graph is built from the output of a convolutional neural network (CNN). 
Then the image is restored by solving an unconstrained quadratic programming problem, using a corresponding graph Laplacian regularizer as a prior term. 
The proposed restoration pipeline is fully differentiable and hence can be end-to-end trained. 
Experimental results demonstrate that our work is less prone to overfitting given small training data. 
It is also endowed with strong cross-domain generalization power, outperforming the state-of-the-art approaches by a remarkable margin.
\end{abstract}

%%%%%%%%% BODY TEXT
\vspace{-11pt}
\section{Introduction}\label{sec:intro}
% Introduce the problem of image restoration and talk about the classic model-based approach
% Image restoration is a class of inverse problems that seek the original image given only one or more observations degraded by corruption, {\it e.g.}, noise, down-sampling, blurring or lost components (either in the spatial or frequency domain). 
% Such problems are inherently under-determined. 
% In order to regularize such ill-posed problems into well-posed ones, a large body of works adopt \emph{signal priors}. 
% By adopting a certain image model, one assumes that the original image should induce a small value for a given model-based signal prior.
% Well-known priors in the literature include total variation (TV) prior \cite{rudin1992nonlinear}, sparsity prior \cite{elad2006image}, graph Laplacian regularizer \cite{pang2017graph}, {\it etc}.

% Image denoising is a problem that has been worked on for decades. Traditionally, it is tackled with priors, NLM, BM3D, sparsity, GLR, TV, etc.

Image denoising is the most fundamental image restoration problem, which has been studied for decades. 
In order to regularize its ill-posed nature, a large body of works adopt \emph{signal priors}. 
By adopting a certain image model, one assumes that the original image should induce a small value for a given \emph{model-based} signal prior.
Representative priors in the literature include non-local self-similarity \cite{buades2005non}, total variation (TV) prior \cite{rudin1992nonlinear}, sparsity prior \cite{elad2006image}, graph Laplacian regularizer \cite{pang2017graph}, {\it etc}.
However, these works place their emphases to the removal of additive white Gaussian noise (AWGN), which is unrealistic and limits their applications in practice.
In the real world, image noise stems from multiple sources, {\it e.g.}, thermal noise, shot noise, dark current noise, making it much more sophisticated than the ideal AWGN.

\deflen{showidth}{75pt}
\deflen{gauinter}{0pt}
\begin{figure}[t]
\centering
        \subfloat[Noise Clinic.]{\includegraphics[width=\showidth]{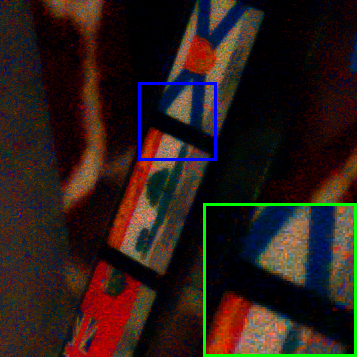}\label{fig:intro_nc}}\hspace{\gauinter}
        \subfloat[CDnCNN.]{\includegraphics[width=\showidth]{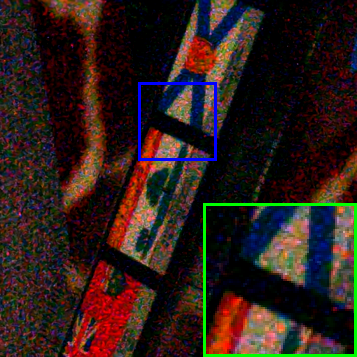}\label{fig:intro_dn}}\hspace{\gauinter}
        \subfloat[DeepGLR.]{\includegraphics[width=\showidth]{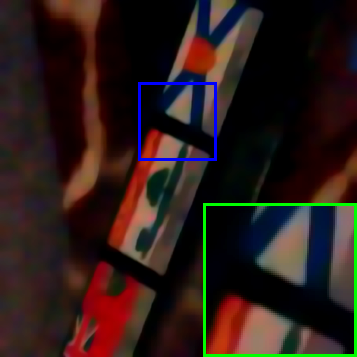}\label{fig:intro_ours}}
        % \vspace{-12pt}
 \caption{Results of real image denoising. (a) Noise Clinic (model-based); (b) CDnCNN (data-driven); (c) DeepGLR (proposed). Our method and CDnCNN are trained for Gaussian denoising.}
 \label{fig:intro}
\end{figure}

% Talk about CNN-based approach, point out that it is not robust to training data, motivate our work
Recent developments in \emph{deep learning} have revolutionized the aforementioned model-based paradigm in image restoration. 
Thanks to the strong learning capacity of convolutional neural networks (CNN) to capture image characteristics, CNN-based approaches have achieved the state-of-the-art performance in Gaussian denoising, {\it e.g.}, \cite{zhang2017beyond,vemulapalli2016deep,tai2017memnet}. 
However, the application of deep learning models on real noise removal remains quite challenging. 
Unlike model-based approaches, CNN-based approaches are data-driven. 
%Their denoising performance/behavior heavily relies on the sufficiency of the training data in describing the corruption process, in order to tune the huge number of model parameters \cite{lecun2015deep}. 
To learn a CNN for real image noise removal, thousands of real noisy images and their noise-free versions are necessary to characterize the correspondence between the corrupted images and the ground-truths \cite{zhu2016noise}.
Unfortunately, acquiring the noise-free images is non-trivial \cite{xu2017multi,chen2018learning}, leading to limited amount of training data. 
In this case, a purely data-driven approach is prone to overfit to the particular characteristics of the training data. 
It fails on test images with statistics different from the training images \cite{mccann2017convolutional}, {\it e.g.}, Figure\,\ref{fig:intro_dn} showcases the result of a pure data-driven approach trained for a different domain.
	
% Talk about the main idea of our work
Differently, model-based denoising approaches rely on basic assumptions about the original images, which ``encode'' assumed image characteristics. 
Without the notion of training, the performance of model-based denoising is generally more robust than data-driven approaches when facing the heterogeneity of natural images \cite{dong2018denoising}. 
However, the assumed characteristics may not perfectly hold in the real world, hindering their performance and flexibility in practice \cite{milanfar2013tour}, {\it e.g.}, the denoising result of Figure\,\ref{fig:intro_nc}.

To achieve robust denoising of real images, in this paper we combine the robustness merit of model-based approaches and the powerful learning capacity of data-driven approaches. 
We achieve this goal by incorporating the graph Laplacian regularizer---a simple yet effective image prior for image restoration tasks---into a deep learning framework. 
Specifically, we train a CNN which takes as input a real noisy image and outputs a set of feature maps. 
Subsequently, a neighborhood graph is built from the output features. 
The image is then denoised by solving an unconstrained quadratic programming (QP) problem, assuming that the underlying true image induces a small value of graph Laplacian regularizer. 
Figure\,\ref{fig:intro_ours} shows the denoising result of our approach, one may clearly see its superiority to the competing methods.

The contributions of our work are as follows:
% %
% \begin{enumerate}[(i)]
% \item We utilize CNN to learn the mapping from input image to a set of features, which determines the suitable graph---an irregular data kernel reflecting the affinity of neighboring pixels in original image---for subsequent restoration.
% %
% \item We implement the restoration with a QP solver based on the acquired graph Laplacian regularizer. To guarantee the numerical stability, we constraint the regularization weight via eigen-analysis.
% %
% \item We are the first in literature encapsulating the widely used graph Laplacian regularizer into a fully-differentiable layer for deep neural networks, which not only leads to the state-of-the-art performance of image denoising, but also manifests higher robustness to the inadequacy of training data.
% \end{enumerate}
%
\begin{enumerate}[(i)]
\item We are the first in literature to incorporate the widely used graph Laplacian regularizer into deep neural networks as a \textit{fully-differentiable} layer, extracting underlying features of the input noisy images and boosting the performance of the subsequent restoration.
\item Our architecture couples the strong graph Laplacian regularization layer---an adaptive low-pass linear filter regardless of the training data---with a light-weight CNN for pre-filtering, making our approach less susceptible to overfitting. 
Moreover, by constraining the regularization weight to prevent steep local minimum, our pipeline is provably {\it numerical stable}.
\item Experimentation shows that, our approach achieves \emph{robust} real noise removal in terms of two perspectives. 
Given small amount of training data, our proposal outperforms CNN-based approaches by avoiding overfitting. 
Secondly, it exhibits strong cross-domain generalization ability, {\it e.g.},  our framework training for Gaussian denoising performs reasonably well on real image denoising.
\end{enumerate}

% About the benefits our out work
We call our proposal \emph{deep graph Laplacian regularization}, or DeepGLR for short.
This paper is organized as follows. 
Related works are reviewed in Section\;\ref{sec:related}. 
We then present our DeepGLR denoising framework combining CNN and a differentiable graph Laplacian regularization layer in Section\;\ref{sec:glr}. 
Section\;\ref{sec:results} presents the experimentation and Section\;\ref{sec:conclusion} concludes our work.

%------------------------------------------------------------------------
\section{Related Works}\label{sec:related}
We first review several deep learning models for image restoration while focusing on image denoising.
We then turn to the review of several representative signal priors, with a focus on graph Laplacian regularization. 
We also briefly review a few works on graph learning.

%\blue{a. introduce CNN with regularization and DeepGLR has a trainble regularization (fit input better) and prevent overfitting while the others do not; \\
%b. graph regularization in general, and in CNN, antonio's paper for overfitting, but that is for smoothing backprop graident between layers Ohhhhhhhhhhhhh, LDMNet for regularizing para weight}

% Review the CNN-based approaches and point out their weakness and differences
\textbf{CNN-based image denoising}:
CNN-based approaches were first popularized in high-level vision tasks, {\it e.g.}, classification \cite{krizhevsky2012imagenet} and detection \cite{ouyang2013joint}, then gradually penetrated into low-level restoration tasks such as image denoising \cite{zhang2017beyond}, super-resolution \cite{dong2014learning}, and non-blind deblurring \cite{xu2014deep}. 
% A typical (supervised learning) work-flow for building a image restoration CNN is to first collect sufficient amount of training data (with ground-truth images). 
% They are fed to the pre-designed CNN for training with back-propagation \cite{lecun2015deep}. 
% During the test phase, the CNN takes as input an image with similar statistics as the training data, and output the recovered image. 
To address the problem of Gaussian noise removal with CNN, Zhang {\it et~al.} \cite{zhang2017beyond} utilize residual learning and batch normalization to build a deep architecture, which provides state-of-the-art results.
In \cite{jain2009natural}, Jain {\it et~al.} propose a simple network for natural image denoising and relate it to Markov random field (MRF) methods. 
To build a CNN capable of handling several noise levels, Vemulapalli {\it et~al.} \cite{vemulapalli2016deep} employ conditional random field (CRF) for regularization. 
Other related works on denoising with CNN includes \cite{mao2016image,tai2017memnet,zhang2018ffdnet,lefkimmiatis2018universal,chen2018deep}, {\it etc}.
Despite their good performance, these approaches focuses on Gaussian denoising and have strong dependency on the training data. 
For effective real image denoising and enhancing, Chen {\it et al.} \cite{chen2018learning} train a CNN to directly  perform restoration on raw image data.
Differently, our DeepGLR enhances the robustness of the denoising pipeline, so as to achieve effective real noise removal.
% However, for a reasonable performance, one needs to carefully pick the weighting parameters for the prior term; while in our work, all the regularization parameters are automatically determined by the CNN with numerical stability guarantee.
% \textbf{GC: more than different, can we say we are better?}

% Review graph Laplacian regularization, our approach build graphs in data-driven manner
\textbf{Image denoising with signal priors}:
We hereby review a few representative works on image denoising using signal priors. For a more complete review, we refer the readers to \cite{milanfar2013tour}. In \cite{buades2005non}, Buades {\it et al.} assume that similar image patches recur non-locally throughout an image. Such a self-similarity assumption has been adopted in many subsequent proposals. 
One notable method, block-matching 3-D (BM3D) \cite{dabov2007image}, performs 3-D transform and Wiener filtering on the grouped similar patches. Elad {\it et al.} \cite{elad2006image} propose K-SVD
denoising, which seeks sparse representations to describe noiseless patches with a learned dictionary. 
A very recent work \cite{xu2018trilateral} extents this notion for real image denoising, though its complexity is too high for practical usage.

Graph Laplacian regularization is a recent popular image prior in the literature, {\it e.g.}, \cite{pang2017graph,elmoataz2008nonlocal,gilboa2007nonlocal}. 
Despite its simplicity, graph Laplacian regularization performs reasonably well for many restoration tasks~\cite{milanfar2013tour}. 
It assumes that the original image, denoted as ${\bf x}\in\mathbb{R}^m$, is smooth with respect to an appropriately chosen graph ${\cal G}$. 
Specifically, it imposes that the value of ${\bf x}^{\rm T}{\bf L}{\bf x}$, {\it i.e.}, the graph Laplacian regularizer, should be small for the original image ${\bf x}$, where ${\bf L}\in \mathbb{R}^{m\times m}$ is the Laplacian matrix of graph ${\cal G}$. 
Typically, a graph Laplacian regularizer is employed for a quadratic programming (QP) formulation \cite{pang2017graph,hu2016graph,liu2014progressive}. 
Nevertheless, choosing a proper graph for image restoration remains an open question. 
In \cite{elmoataz2008nonlocal,liu2014progressive}, the authors build their graphs from the corrupted image with simple ad-hoc rules; while in \cite{pang2017graph}, Pang {\it et~al.} derive sophisticated principles for building graphs under strict conditions. 
Different from existing works, our DeepGLR framework constructs neighborhood graphs from the CNN outputs, {\it i.e.,} our graphs are built in a {\it data-driven} manner, which learns the appropriate graph connectivity for restoration directly. 
In \cite{shen2016deep,barron2016fast}, the authors also formulate graph Laplacian regularization in a deep learning pipeline; yet unlike ours, their graph constructions are fixed functions, {\it i.e.}, they are not data-driven.

% Differentiate our work from existing works combines graph and CNN or learning
\textbf{Learning with graphs}:
there exist a few works combining tools of graph theory with data-driven approaches. 
In \cite{kipf2017semi,defferrard2016convolutional} and subsequent works, the authors study the notion of convolution on graphs, which enables CNNs to be applied on irregular graph kernels. 
In \cite{turaga2010convolutional}, Turaga~{\it et~al}. let a CNN to directly output edge weights for fixed graphs; while Egilmez~{\it et~al}. \cite{egilmez2017graph} learn the graph Laplacian matrices with a maximum a posteriori (MAP) formulation. 
Our work also learns the graph structure. 
Different from the methodology of existing works, we build the graphs from the learned features of CNN for subsequent regularizations. 
% Note that the recent work \cite{pang2018zoom} also uses the notion of graph Laplacian regularizer for training. 
% However, the graphs in \cite{pang2018zoom} is fixed given the inputs, and it is only used to define a loss for training; while our approach learns the proper graph for regularization in a differentiable pipeline.

%------------------------------------------------------------------------
\section{Deep Graph Laplacian Regularization}\label{sec:glr}
We now present our DeepGLR framework integrating graph Laplacian regularization into CNN for real noise removal. 
A graph Laplacian regularization layer is composed of two modules: a graph construction module \cite{pang2017graph} and a QP solver \cite{amos2017optnet}. 
We first present the details of graph Laplacian regularization \cite{pang2017graph,elmoataz2008nonlocal,hu2016graph} as an image prior, then introduce its encapsulation as a layer in a CNN.

\subsection{Formulation}
\label{ssec:glr_general}
%
% Talk about the basics of image denoising and concepts of graph
We start our illustration with a simple AWGN denoising formulation, which will be extended to take account for more complex cases (Section\,\ref{ssec:iter}). 
Consider the following image corruption model:
\begin{equation}\label{eq:corrupt}
{\bf y} = {\bf x} + {\bf n},
\end{equation}
Here ${\bf x}\in\mathbb{R}^m$ is the original image or image patch (in vector form) with $m$ pixels, while ${\bf n}$ is an additive Gaussian noise term and ${\bf y}$ is the noisy observation. 
Given an appropriate neighborhood graph ${\cal G}$ with $m$ vertices representing the pixels, graph Laplacian regularization assumes the original image ${\bf x}$ is \emph{smooth} with respect to ${\cal G}$ \cite{shuman2013emerging}. 
Denoting the edge weight connecting pixels $i$ and $j$ as $w_{ij}$, the adjacency matrix ${\bf A}$ of graph ${\cal G}$ is an $m$-by-$m$ matrix, whose $(i,j)$-th entry is $w_{ij}$. 
The degree matrix of ${\cal G}$ is a diagonal matrix ${\bf D}$ whose $i$-th diagonal entry is $\sum\nolimits_{j=1}^{m}{w}_{ij}$. 
Then the (combinatorial) graph Laplacian matrix ${\bf L}$ is a positive semidefinite (PSD) matrix given by ${\bf L} = {\bf D} - {\bf A}$, which induces the graph Laplacian regularizer ${\bf x}^{\rm T}{\bf L}{\bf x}\ge 0$ \cite{shuman2013emerging}. 

% How to regularize the recovery with graph Laplacian regularization
To recover ${\bf x}\in\mathbb{R}^m$ with graph Laplacian regularization, one can formulate a \textit{maximum a posteriori} (MAP) problem as follows:
\begin{equation}\label{eq:qp_glr}
{\bf x}^{\star} = \mathop {\arg \min }\limits_{\bf{x}} {{\left\| {  {\bf y} - {\bf x} } \right\|}_2^2} + \mu\bm\cdot{\bf x}^{\rm T}{\bf Lx},
\end{equation}
where the first term is a fidelity term (negative log likelihood) computing the difference between the observation ${\bf y}$ and the recovered signal ${\bf x}$, and the second term is the graph Laplacian regularizer (negative log signal prior). 
$\mu\ge 0$ is a weighting parameter. 
For effective regularization, one needs an appropriate graph $\cal G$ reflecting the image structure of ground-truth ${\bf x}$. 
In most works such as \cite{pang2017graph,hu2016graph,osher2017low}, it is derived from the noisy $\bf{y}$ or a pre-filtered version of $\bf{y}$. 

% How to build the graph G
For illustration, we define a matrix-valued function ${\bf F}({\bf y}): \mathbb{R}^m\mapsto\mathbb{R}^{m\times N}$, where its $n$-th column is denoted as ${\bf f}_n$ where ${\bf f}_n\in\mathbb{R}^m$, $1\le n\le N$. 
Hence, applying ${\bf F}$ to observation ${\bf y}$ maps it to a set of $N$ length-$m$ vectors $\{{\bf f}_n\}_{n=1}^{N}$. 
Using the same terminology in \cite{pang2017graph}, the ${{\bf f}_n}$'s are called \emph{exemplars}.
Then the edge weight $w_{ij}$ ($1\le i, j \le m$) is computed by:
% \red{GC: u map a length-$m$ vector to a $m \times N$ matrix, not just to length-$N$ feature vector? so basically corresponding to each pixel in ${\bf y}$, u have a length-$N$ vector?}
%
\begin{equation}\label{eq:weight_ori}
w_{ij} = \exp\left(-\frac{{\rm dist}(i,j)}{2\epsilon^2}\right),
\end{equation}
where
\begin{equation}\label{eq:weight}
{\rm dist}(i,j) = \sum_{n = 1}^N{\left ( {\bf f}_n(i) - {\bf f}_n(j) \right )^2}.
\end{equation}
Here ${\bf f}_n(i)$ denotes the $i$-th element of ${\bf f}_n$. 
\eqref{eq:weight} is the Euclidean distance between pixels $i$ and $j$ in the $N$-dimension feature space defined by $\{{\bf f}_n\}_{n=1}^{N}$. 
%\red{GC: if that's the case, there should be one length-$N$ vector for each length-$m$ signal.}
In practice, the ${{\bf f}_n}$'s should reflect the characteristics of the ground-truth image ${\bf x}$ for effective restoration. 
Though different works use different schemes to build a similarity graph $\cal G$, most of them differ only in the choice of exemplars ${\bf F}({\bf y})$ (or the ${\bf f}_n$'s). 
In \cite{liu2014progressive,kheradmand2014general}, the authors restrict the graph structure to be a 4-connected grid graph and let ${\rm dist}(i,j)= \left ( {\bf y}(i) - {\bf y}(j) \right )^2$, which is equivalent to let ${\bf F}({\bf y})={\bf y}$. 
In \cite{hu2016graph}, Hu~{\it et al}. operate on overlapping patches and let ${\bf F}({\bf y})$ be the noisy patches similar to ${\bf y}$. 
Pang~{\it et al.} \cite{pang2017graph} interpret the $\{{\bf f}_n\}_{n=1}^{N}$ as samples on a high-dimensional Riemannian manifold and derive the optimal ${\bf F}$ under certain assumptions. 

\subsection{Graph Laplacian Regularization Layer}
\label{ssec:glr_layer}
% Introduce the basic idea of our work
In contrast to existing works, we deploy graph Laplacian regularization as a layer in a deep learning pipeline, by \emph{implementing the function {\bf F} with a CNN}. 
In other words, the corrupted observation ${\bf y}$ is fed to a CNN (denoted as $\rm{CNN}_{\bf F}$) which outputs $N$ exemplars (or feature maps) $\{{\bf f}_n\}_{n=1}^{N}$. 

% Talk about how we achieve "non-local" in CNN
Specifically, we perform denoising on a \emph{patch-by-patch} basis, similarly done in \cite{pang2017graph,hu2016graph,liu2014progressive}. 
Suppose the observed noisy image, denoted as ${\cal Y}$, is divided into $K$ overlapping patches $\{{\bf y}_k\}_{k=1}^K$.
Instead of na\"{i}vely feeding each patch to $\rm{CNN}_{\bf F}$ individually then performing optimization, we feed the whole noisy image ${\cal Y}$ to it, leading to $N$ exemplars images of the same size as ${\cal Y}$, denoted as $\{{\cal F}_n\}_{n=1}^N$. 
By doing so, for the $\rm{CNN}_{\bf F}$ with receptive field size as $r$, each pixel $i$ on ${\cal F}_n$ is influenced by all the pixels $j$ on image ${\cal Y}$ if $j$ is in the $r\times r$ neighborhood of $i$. 
As a result, for a larger receptive field $r$, the exemplar ${\cal F}_n$ effectively takes into account more \emph{non-local} information for denoising, resembling the notion of non-local means (NLM) in the classic works \cite{buades2005non,dabov2007image}. 

\begin{figure*}[t]
\centering
    \includegraphics[width=370pt]{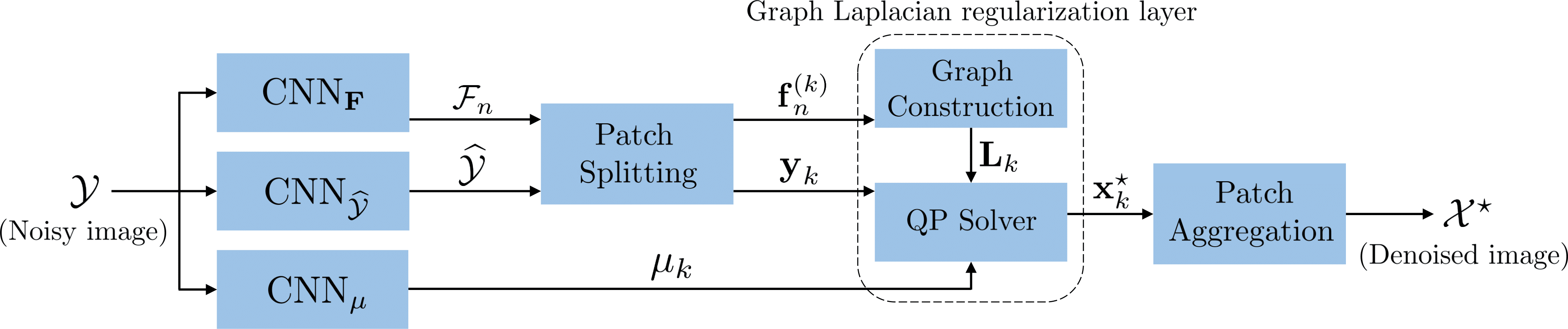}
\caption{Block diagram of the proposed GLRNet which employs a graph Laplacian regularization layer for image denoising.}
\label{fig:glrnet}
\end{figure*}
%
% How we get the final denoised image with the feature maps of CNN
With the exemplar images, we simply divide each of them, say, ${\cal F}_n$, into $K$ overlapping patches ${\bf f}_n^{(k)}\in\mathbb{R}^m$, $1\le k\le K$. 
To denoise a patch ${\bf y}_k$, we build a graph ${\cal G}_k$ with its corresponding $N$ exemplars $\{{\bf f}_n^{(k)}\}_{n=1}^N$ in the way described in Section~\ref{ssec:glr_general}, leading to the graph Laplacian matrix ${\bf L}_k$. 
Rather than a fully connected graph, we choose the \emph{8-connected pixel} adjacency graph structure, {\it i.e.}, in the graph ${\cal G}_k$, every pixel $i$ is only connected to its 8 neighboring pixels. 
Hence, the graph Laplacian ${\bf L}_k$ is sparse with \emph{fixed} sparsity pattern.
The graph Laplacian ${\bf L}_k$, together with patch ${\bf y}_k$, are passed to the QP solver, which resolves the problem \eqref{eq:qp_glr} and outputs the denoised patch ${\bf x}_k^{\star}$.
By equally aggregating the denoised image patches ${\bf x}_k^{\star}$ ($1\le k\le K$), we arrive at the denoised image (denoted by ${\cal X}^{\star}$).
From spectral graph theory \cite{chung1997spectral}, the graph Laplacian regularization layer is always an \emph{adaptive linear low-pass filter}, regardless of the training data.

% Three elements of the graph Laplacian regularization layer
Apart from the aforementioned procedure, for practical restoration with the graph Laplacian regularization layer, the following ingredients are also adopted.
\begin{enumerate}[(i)]
\item{Generation of $\mu$}: in \eqref{eq:qp_glr}, $\mu$ trades off the importance between the fidelity term and the graph Laplacian regularizer. 
To generate the appropriate $\mu$'s for regularization, we build a light-weight CNN (denoted as $\rm{CNN}_{\mu}$).
Particularly, based on the corrupted image ${\cal Y}$, it produces a set of $\{\mu_k\}_{k=1}^K$ corresponding to the patches $\{{\bf y}_k\}_{k=1}^K$. 
\item{Pre-filtering}: in many denoising literature ({\it e.g.}, \cite{milanfar2013tour,chatterjee2012patch,pang2015optimal}), it is popular to perform a pre-filtering operation to the noisy image ${\cal Y}$ before optimization. 
We borrow this idea and implement a pre-filtering step with a light-weight CNN (denoted as ${\rm CNN}_{\widehat{\cal Y}}$). 
It operates on image ${\cal Y}$ and outputs the filtered image ${\widehat{\cal Y}}$. 
Hence, instead of $\{{\bf y }_k\}_{k=1}^K$, we employ the patches of ${\widehat{\cal Y}}$, {\it i.e.}, $\{\widehat{{\bf y}}_k\}_{k=1}^K$, in the data term of problem \eqref{eq:qp_glr}.
%
%\item{Weight normalization}: the work \cite{pang2017graph} proved that using the normalized version of $\psi_{ij}$ as edge weights suppresses backward diffusion \cite{weickert1998anisotropic}, leading to better regularization for natural images. 
%In this work, we also normalize $\psi_{ij}$ in a way similarly done in \cite{pang2017graph}. 
%Let $\rho_i=\sum\nolimits_{j=1}^{m}{\psi}_{ij}$, the normalized edge weights, $w_{ij}$, is given by $w_{ij} = {{{\psi _{ij}}} \mathord{\left/ {\vphantom {{{\psi _{ij}}} {\sqrt {{\rho _i}{\rho _j}} }}} \right. \kern-\nulldelimiterspace} {\left({\rho _i}{\rho _j}\right)^{\gamma} }}$ where $\gamma=1/2$.
%Then instead of $\psi_{ij}$, we construct the graph Laplacian matrix ${\bf L}$ based on weights $w_{ij}$.
\end{enumerate}

% Discussion on the differentiability, provide a figure of the whole pipeline
We call the presented architecture which performs restoration with a graph Laplacian regularization layer \emph{GLRNet}.
Figure\;\ref{fig:glrnet} shows its block diagram, where the graph Laplacian regularization layer is composed of a \emph{graph construction} module generating graph Laplacian matrices, and a \emph{QP solver} producing denoised patches. The denoised image ${\cal X}^{\star}$ is obtained by aggregating the denoised patches.
We see that, to achieve denoising, a noisy image first goes through the pre-filtering network ${\rm CNN}_{\widehat{\cal Y}}$. 
It is then processed by the graph Laplacian regularization layer, a linear low-pass filter.
As a result, our denoising framework is {\it less sensitive} to the training data. 
Moreover, it is \emph{less affected} by the chosen structure of ${\rm CNN}_{\widehat{\cal Y}}$, as to be seen in Section\,\ref{ssec:samlldata}.

Since the graph construction involves only elementary functions such as exponentials, powers and arithmetic operations, it is differentiable. 
Furthermore, from \cite{amos2017optnet} the QP solver is also differentiable with respect to its inputs. 
Hence, the graph Laplacian regularization layer is fully differentiable, and our denoising pipeline can be {\it end-to-end} trained. 
The backward computation of the graph Laplacian regularization layer, including both the graph construction and the QP solver, is provided in the supplementary material.

\subsection{Iterative Filtering}\label{ssec:iter}
\begin{figure*}[t]
\centering
    \includegraphics[width=320pt]{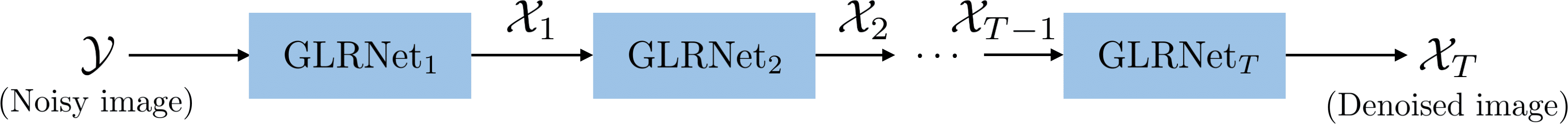}
\caption{Block diagram of the overall DeepGLR framework.}
\label{fig:deepglr}
\end{figure*}
To handle the non-Gaussian property of real image noise, we cascade $T$ blocks of GLRNet (each block has a graph Laplacian regularization layer) for effective restoration, leading to our $DeepGLR$ framework.
Suppose each noise component removed by a GLRNet follows Gaussian distribution, then the overall noise that is removed forms a mixture of Gaussian. 
Ideally, it can approximate any distribution arbitrarily well \cite{reynolds2015gaussian}. 
Moreover, classic literature, {\it e.g.}, \cite{elad2006image,milanfar2013tour,dabov2007image}, also filter the noisy image iteratively to gradually enhance the image quality. 
Similar to \cite{vemulapalli2016deep}, all the GLRNets in our work have the same structure and share the same parameters. 
% In other words, to obtained the denoised image, the same denoising filter is iteratively applied to the noisy image ${\cal Y}$ for $T$ times. 
Figure\;\ref{fig:deepglr} shows the block diagram of DeepGLR. 
In Figure\;\ref{fig:deepglr} and the following presentation, we have removed the superscript ``$\star$'' from ${\cal X}^{\star}$ for simplicity.
% We employ 2 or 3 cascades of GLRNet in our experiments. 

To effectively train the proposed DeepGLR framework, we adopt a loss penalizing differences between the recovered image and the ground-truth. 
Given the noisy image ${\cal Y}$, its corresponding ground-truth image ${\cal X}^{(\rm gt)}$ and the restoration result ${\cal X}_T$, our loss function is defined as the mean-square-error (MSE) between ${\cal X}^{(\rm gt)}$ and ${\cal X}_T$, {\it i.e.}, 
\begin{equation}\label{eq:loss_res}
L_{\rm res}\hspace{-2pt}\left({\cal X}^{(\rm gt)}, {\cal X}_T\right)\hspace{-2pt}=\hspace{-2pt}\frac{1}{HW}\sum_{i = 1}^H\hspace{-1pt}\sum_{j = 1}^W\hspace{-2pt}{\left ( {\cal X}^{\rm (gt)}(i,j)\hspace{-2pt}-\hspace{-2pt}{\cal X}_T(i,j) \right)\hspace{-1pt}^2},
\end{equation}
where $H$ and $W$ are the height and width of the images, respectively. ${\cal X}^{\rm (gt)}(i,j)$ is the $(i,j)$-th pixel of ${\cal X}^{\rm (gt)}$, the same for ${\cal X}_T(i,j)$. 
Note that in our experiments, the restoration loss is only applied to the output of the last cascade ${\cal X}_T$, {\it i.e.}, only the final restoration result is supervised.

For simplicity, we have presented our framework for denoising of 1-channel images. 
To adapt it for denoising of color images, the first layers of the CNNs are changed to take 3-channel inputs, and ${\rm CNN}_{\widehat{\cal Y}}$ should output 3 channels. 
Moreover, in the graph Laplacian regularization layer, the 3 channels share the \emph{same} graph for utilizing inter-color correlation; while the QP solver solves for three separate systems of linear equations then outputs a color image. 
We choose to work in the YUV color space.
During training, the loss function of the 3 channels are computed. 
We then take the average as the total loss.

\subsection{Numerical Stability}
\label{ssec:glr_loss}
We hereby analyze the stability of the proposed GLR layer which is indispensable for the stability of the entire framework.
Our denoising approach embedding the QP solver into the processing pipeline have numerical stability guarantee.
Firstly, the problem \eqref{eq:qp_glr} essentially boils down to solving a system of linear equations
\begin{equation}\label{eq:qp_system}
\left({\bf I}+\mu{\bf L}\right){\bf x}^{\star}={\bf y},
\end{equation}
where {\bf I} is an identity matrix. 
It admits a closed-form solution ${\bf x}^{\star} = \left({\bf I} + \mu{\bf L}\right)^{-1}{\bf y}$. 
Thus, one can interpret ${\bf x}^{\star}$ as a filtered version of noisy input {\bf y} with linear filter $\left({\bf I} + \mu{\bf L}\right)^{-1}$. 
As a combinatorial graph Laplacian, ${\bf L}$ is positive semidefinite and its smallest eigenvalue is 0 \cite{shuman2013emerging}.
Therefore, with $\mu \ge 0$, the matrix ${\bf I} + \mu{\bf L}$ is always \emph{invertible}, with the smallest eigenvalue as $\lambda_{\rm min}=1$. 
However, the linear system becomes unstable for a numerical solver if ${\bf I} + \mu{\bf L}$ has a large \textit{condition number} $\kappa$---the ratio between the largest and the smallest eigenvalues $\lambda_{\max} / \lambda_{\min}$ for a normal matrix, assuming an $l_2$-norm \cite{horn1990matrix}. 
Using eigen-analysis, we have the following theorem regarding $\kappa$.
\begin{theorem}\label{thm:cond}
The condition number $\kappa$ of ${\bf I} + \mu{\bf L}$ satisfies
\begin{equation}
\kappa \le 1 + 2 \, \mu \, d_{\rm max}, 
\end{equation}
where $d_{\rm max}$ is the maximum degree of the vertices in ${\cal G}$.
\end{theorem}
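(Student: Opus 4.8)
The plan is to turn the condition number into a ratio of extreme eigenvalues, use that ${\bf L}$ is PSD with a known smallest eigenvalue, and then control the largest eigenvalue of ${\bf L}$ by $2\,d_{\rm max}$.

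First I would note that ${\bf I} + \mu{\bf L}$ is real symmetric (hence normal), so by the characterization of $\kappa$ recalled just above the theorem, $\kappa = \lambda_{\max}({\bf I}+\mu{\bf L})/\lambda_{\min}({\bf I}+\mu{\bf L})$. Writing the eigenvalues of the PSD matrix ${\bf L}$ as $0=\lambda_1\le\lambda_2\le\cdots\le\lambda_m$, the matrix ${\bf I}+\mu{\bf L}$ has eigenvalues $1+\mu\lambda_i$; since $\mu\ge 0$, its smallest eigenvalue is $1+\mu\lambda_1=1$ (recovering the claim $\lambda_{\min}=1$ already made in the text) and its largest is $1+\mu\lambda_m$. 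Therefore $\kappa = 1+\mu\,\lambda_{\max}({\bf L})$, and it remains only to bound $\lambda_{\max}({\bf L})$.

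Second, I would bound $\lambda_{\max}({\bf L})$ by applying Gershgorin's circle theorem to ${\bf L}={\bf D}-{\bf A}$. The $i$-th diagonal entry of ${\bf L}$ is the degree $d_i=\sum_{j}w_{ij}$, and the off-diagonal entries of row $i$ are $-w_{ij}$. Because the weights defined in \eqref{eq:weight_ori} are exponentials, they are strictly positive, so the sum of the absolute values of the off-diagonal entries of row $i$ equals $\sum_{j\ne i}w_{ij}=d_i$. Hence every eigenvalue of ${\bf L}$ lies in the interval $[\,d_i-d_i,\;d_i+d_i\,]=[\,0,\,2d_i\,]$, and taking the union over all rows gives $\lambda_{\max}({\bf L})\le 2\,d_{\rm max}$. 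Substituting into $\kappa=1+\mu\,\lambda_{\max}({\bf L})$ yields $\kappa\le 1+2\,\mu\,d_{\rm max}$, as claimed.

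There is no serious obstacle here; the only points that deserve a line of justification are that the graph is undirected (so ${\bf L}$ is symmetric and the $l_2$ condition number really is the ratio of extreme eigenvalues) and that the off-diagonal weights are nonnegative (so the Gershgorin radius of row $i$ is exactly $d_i$ rather than something larger)---both are immediate from the construction in Section~\ref{ssec:glr_general}. If one prefers to avoid Gershgorin, the same constant follows from $\lambda_{\max}({\bf L})\le\lambda_{\max}({\bf D})+\lambda_{\max}(-{\bf A})\le d_{\rm max}+\rho({\bf A})$ combined with the standard row-sum (Perron--Frobenius) bound $\rho({\bf A})\le d_{\rm max}$.
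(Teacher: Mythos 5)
Your proof is correct and follows essentially the same route as the paper's: identify $\lambda_{\min}({\bf I}+\mu{\bf L})=1$ from the PSD property of the combinatorial Laplacian, then bound $\lambda_{\max}$ via the Gershgorin circle theorem using the fact that each row's off-diagonal absolute sum equals the degree. The only cosmetic difference is that you apply Gershgorin to ${\bf L}$ and then shift by ${\bf I}+\mu(\cdot)$, whereas the paper applies it directly to ${\bf I}+\mu{\bf L}$; these are equivalent.
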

\begin{proof}
As discussed, we know $\lambda_{\min}=1$. 
By applying the Gershgorin circle theorem \cite{varga2010gervsgorin}, $\lambda_{\max}$ can be upper-bounded as follows. 
First, the $i$-th Gershgorin disc of $\bf{L}$ has radius $r_i = \sum\nolimits_{j \neq i} |w_{ij}| \le d_{\rm max}$, and the center of the disc $i$ for ${\bf I} + \mu{\bf L}$ is $1+\mu r_i$. 
From the Gershgorin circle theorem, the eigenvalues of ${\bf I} + \mu{\bf L}$ have to reside in the union of all Gershgorin discs. 
Hence, $\lambda_{\rm max} \le \max_{i} \{1 + 2 \, \mu \, r_i\}$, leading to $\kappa = \lambda_{\max} \le 1 + 2 \, \mu \, d_{\max}$.
\end{proof}

Thus, by constraining the value of the weighting parameter $\mu$, we can suppress the condition number $\kappa$ and ensure a stable denoising filter. 
Denote the maximum allowable condition number as $\kappa_{\rm max}$ where we impose $1+2\mu d_{\rm max}\le \kappa_{\max}$, leading to
\begin{equation}
\mu \le \frac{\kappa_{\rm max} - 1}{2d_{\rm max}}=\mu_{\rm max}.
\end{equation}
Hence, if ${\rm CNN}_{\mu}$ generates a value $\mu$ no greater than $\mu_{\rm max}$, then $\mu$ stays unchanged, otherwise it is truncated to $\mu_{\rm max}$. 
We empirically set $\kappa_{\rm max}=250$ for both training and testing to guarantee the stability of our framework.

%------------------------------------------------------------------------
\section{Experimental Results}\label{sec:results}
\begin{figure*}[t]
\centering
    \includegraphics[width=370pt]{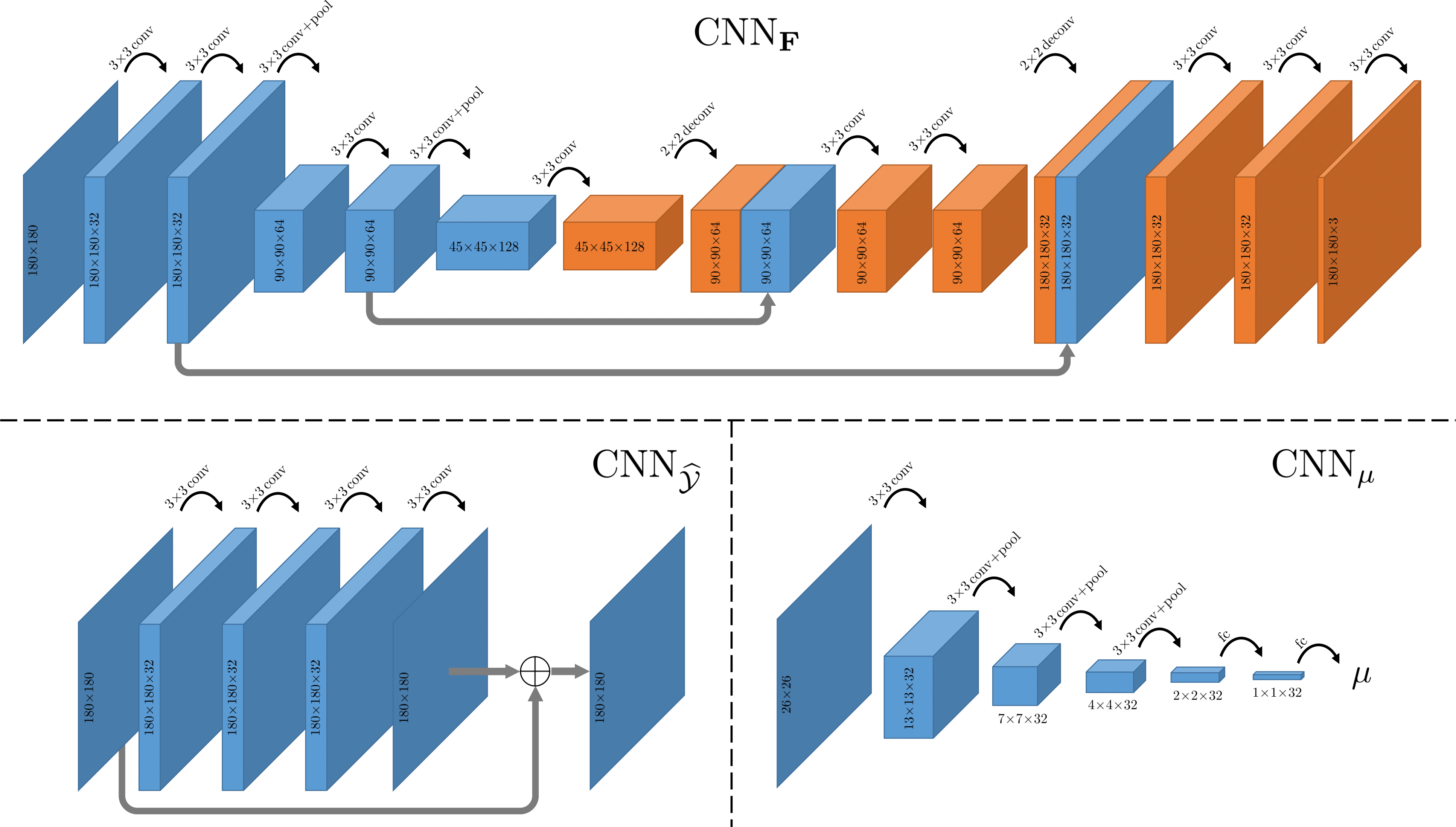}
\caption{Network architectures of ${\rm CNN}_{\bf F}$, ${\rm CNN}_{\widehat{\cal Y}}$ and ${\rm CNN}_{\mu}$ in the experiments. Data produced by the decoder of ${\rm CNN}_{\bf F}$ is colored in orange. }
\label{fig:network}
\vspace{-3pt}
\end{figure*}
\deflen{gauwidth}{82pt}
\begin{figure*}[t]
\centering
    \includegraphics[width=320pt]{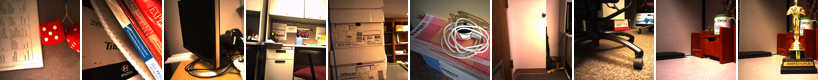}
\caption{The 10 scenes of the RENOIR dataset \cite{anaya2018renoir} used for real image denoising.}
\label{fig:realall}
\vspace{-3pt}
\end{figure*}

Extensive experiments are presented in this section. 
We first describe our adopted CNN architecture and experimental setup in details then apply our model on real image denoising and validate its robustness.
First, it provides satisfactory results when trained with very small amount of data.
Moreover, we demonstrate the strong generalization power of our proposal, which outperforms the state-of-the-art approaches by a remarkable margin.
We use \emph{peak signal-to-noise ratio} (PSNR) which computes in logarithmic (dB) scale as a measurement for objective evaluation. 

% We also apply our approach for vanilla Gaussian noise removal, showing that DeepGLR performs on par with 
%
\subsection{Network Architectures}

Our framework does not limit the choices of network architectures，and one has the freedom in designing the specifications of ${\rm CNN}_{\bf F}$, ${\rm CNN}_{\mu}$ and ${\rm CNN}_{\widehat{\cal Y}}$. 
In our experiment, we choose the networks shown in Figure\;\ref{fig:network}.
Specifically, 
\begin{enumerate}[(i)]
\item{${\rm CNN}_{\bf F}$}: 
To generate exemplars $\{{\bf f}_n\}_{n=1}^N$, we adopt the popular hour-glass structure for ${\rm CNN}_{\bf F}$ which has an encoder and a decoder with skip-connections \cite{ronneberger2015u}. 
Similar to \cite{pang2017graph}, we use $N=3$ exemplars to build the graphs.
\item{${\rm CNN}_{\widehat{\cal Y}}$}: The pre-filtered image $\widehat{\cal Y}$ is simply generated by a light-weight CNN with 4 convolution layers using a residual learning structure \cite{he16}.
\item{${\rm CNN}_{\mu}$}: The weighting parameter $\mu$ is estimated on a patch-by-patch basis. Our experiments uses patch size of $26 \times 26$ for denoising. Hence, starting from a noisy patch, it has undergone 4 convolution layers with $2\times 2$ max pooling and 2 fully-connected layers, leading to the parameter $\mu$.
\end{enumerate}
Except for the last convolution layers of ${\rm CNN}_{\bf F}$ and ${\rm CNN}_{\widehat{\cal Y}}$, and the two deconvolution layers of ${\rm CNN}_{\bf F}$, all the rest network layers shown in Figure\,\ref{fig:network} are followed by a ${\rm ReLU}(\bm\cdot)$ activation function. 
Note that the input image can have different sizes as long as it is a multiple of 4. 
For illustration, Figure\;\ref{fig:network} shows the case when the input is of size $180\times 180$.

\deflen{realwidth}{82pt}
\begin{figure*}
\centering
        \subfloat[Ground-truth]{\includegraphics[width=\realwidth]{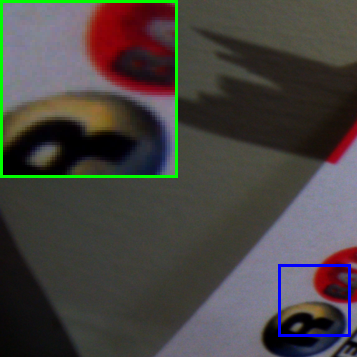}}\hspace{\gauinter}
        \subfloat[Noisy]{\includegraphics[width=\realwidth]{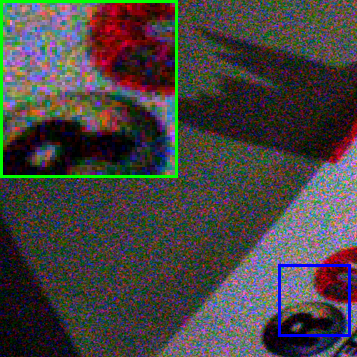}}\hspace{\gauinter}
        \subfloat[CBM3D]{\includegraphics[width=\realwidth]{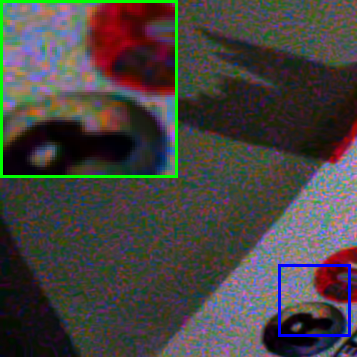}}\hspace{\gauinter}
        \subfloat[MC-WNNM]{\includegraphics[width=\realwidth]{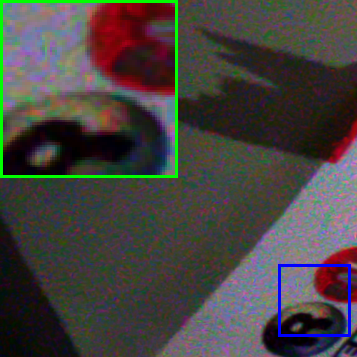}}\hspace{\gauinter}
        \subfloat[Noise Clinic]{\includegraphics[width=\realwidth]{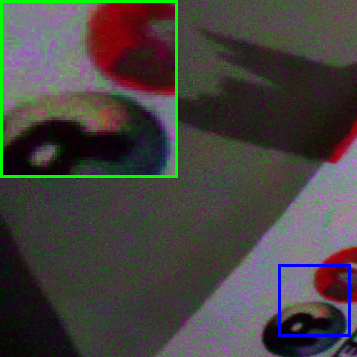}}\\
        \subfloat[CDnCNN]{\includegraphics[width=\realwidth]{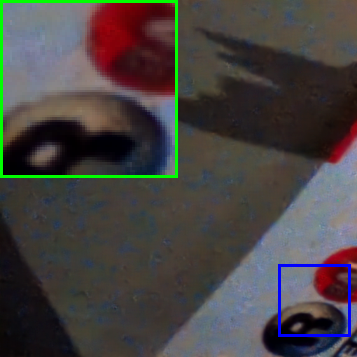}}\hspace{\gauinter}
        \subfloat[GLRNet]{\includegraphics[width=\realwidth]{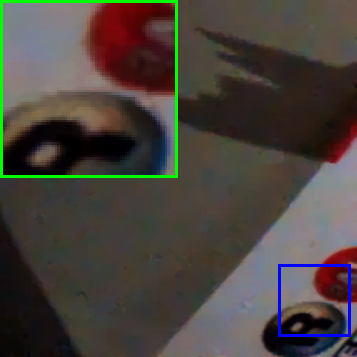}}\hspace{\gauinter}
        \subfloat[DeepGLR-FR]{\includegraphics[width=\realwidth]{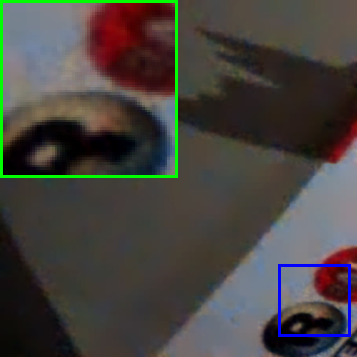}}\hspace{\gauinter}
        \subfloat[DeepGLR-PR]{\includegraphics[width=\realwidth]{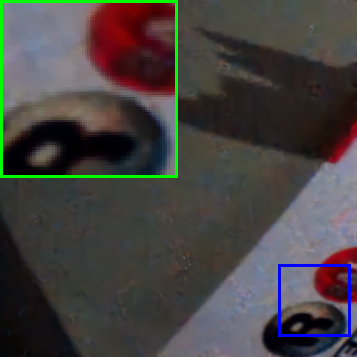}}\hspace{\gauinter}
        % \subfloat[DeepGLR-PC]{\includegraphics[width=\realwidth]{}}\hspace{\gauinter}
        \subfloat[DeepGLR]{\includegraphics[width=\realwidth]{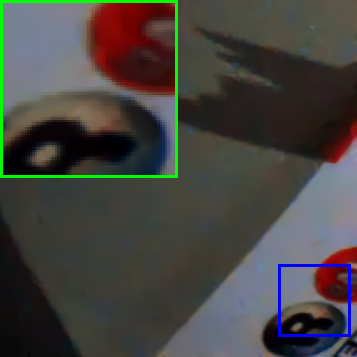}}       
 \caption{Real image noise removal for image 35 of the RENOIR dataset with different approaches.}
 \label{fig:renoir35}
 \vspace{-3pt}
\end{figure*}
\begin{table*}
  \centering
  \caption{Evaluation of different methods for real image denoising. The best results for each metric, except for those tested on the training set, are highlighted in boldface.}
  \footnotesize
    \begin{tabular}{c||c||c|c|c|c|c|c|c}
    \hline
    \multicolumn{1}{c||}{\multirow{2}[4]{*}{Metric}} & \multirow{2}[4]{*}{Noisy} & \multicolumn{7}{c}{Method} \\
\cline{3-9}    \multicolumn{1}{c||}{} &       & CBM3D & MC-WNNM & Noise Clinic & \multicolumn{1}{p{7.21em}|}{CDnCNN\,(Train)} & \multicolumn{1}{p{7.165em}|}{CDnCNN\,(Test)} & \multicolumn{1}{p{7.21em}|}{DeepGLR\,(Train)} & \multicolumn{1}{p{7.375em}}{DeepGLR\,(Test)} \\
    \hline
    \hline
    PSNR  & 20.63 & 26.08 & 26.23 & 27.43 & 34.82 & 32.79 & 34.28 & \textbf{32.96} \\
    \hline
    SSIM  & 0.3081 & 0.6727 & 0.6294 & 0.6040 & 0.8852 & 0.8583 & 0.8795 & \textbf{0.8634} \\
    \hline
    \end{tabular}%
  \label{tab:renoir_comp}%
\end{table*}%

\subsection{Robust Denoising with Small Training Set}\label{ssec:samlldata}

% \begin{figure*}[t]
% \centering
%         \subfloat[Ground-truth]{\includegraphics[width=\realwidth]{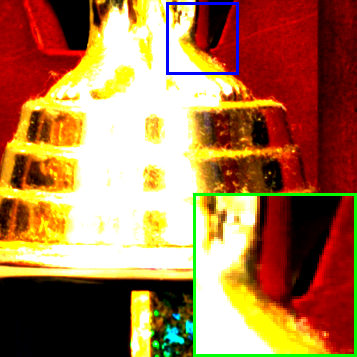}}\hspace{\gauinter}
%         \subfloat[Noisy]{\includegraphics[width=\realwidth]{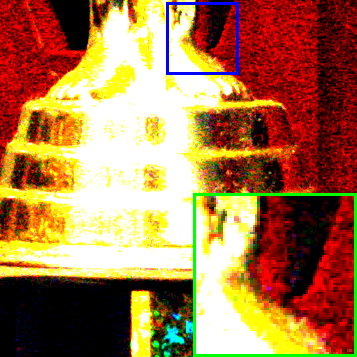}}\hspace{\gauinter}
%         % \subfloat[CBM3D]{\includegraphics[width=\realwidth]{}}\\
%         \subfloat[Noise Clinic]{\includegraphics[width=\realwidth]{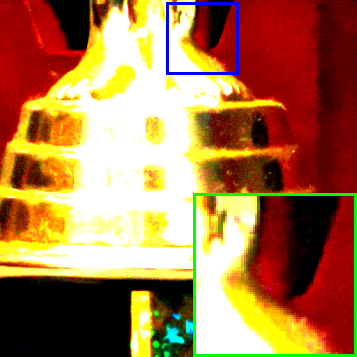}}\hspace{\gauinter}
%         \subfloat[CDnCNN]{\includegraphics[width=\realwidth]{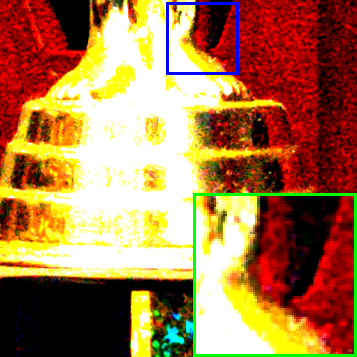}}\hspace{\gauinter}
%         \subfloat[DeepGLR]{\includegraphics[width=\realwidth]{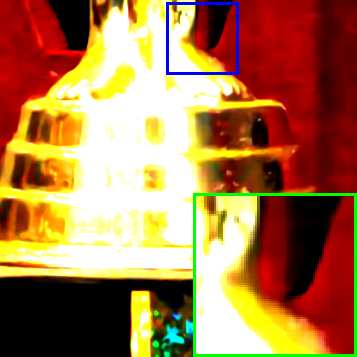}}
%         % \vspace{-12pt}
%  \caption{The proposed DeepGLR trained for AWGN denoising generalizes well to real image denoising.}
%  \label{fig:gen2}
% \end{figure*}

To see the robustness of our proposal, we begin with experimenting DeepGLR with small data.
In this experiment, we employ the {\it RENOIR} \cite{anaya2018renoir} dataset, which consists of real low-light noisy images with the corresponding (almost) noise-free versions. 
Specifically, its subset with 40 scenes collected with a Xiaomi Mi3 smart-phone are used in our experiments.
Since some of the scenes have very low intensities while some of the given ground-truth images are still noisy, we remove the scenes whose ground-truths have: (a) average intensities lower than 0.3 (assuming the intensity ranges from 0 to 1); and (b) estimated PSNRs (provided by \cite{anaya2018renoir}) lower than 36\,dB, leading to 10 valid image pairs. 
Thumbnails of the images are shown in Figure\,\ref{fig:realall}. 

We adopt a two-fold cross validation scheme to evaluate the performance of our approach on small dataset. 
In each of the two trials, we perform training on one fold---only five images---and testing on the other, then measure the performance by the averaging of the results of both trials. 
The 10 images are randomly split into two folds and we repeat such two-fold cross validation process for five times then the results are averaged. 
For objective evaluation, peak signal-to-noise ratio (PSNR) and structural similarity (SSIM)\,\cite{wang2004image} are employed. 
During the training phase, the noisy images, accompanied with their noise-free versions, are fed to the network for training.
For both training and testing, the overlapping patches are of size $26 \times 26$, {\it i.e.}, $m = 26^2 = 676$, where neighboring patches are of a stride 22 apart. 
We let the batch size be 4 and the model is trained for 200 epochs. 
A multi-step learning rate decay policy, with values $[1, 0.5, 0.1, 0.05, 0.01, 0.005]\times 10^{-3}$, are used, where the learning rate decreases at the beginning of epochs $[2, 5, 20, 50, 150]$. 
We implement the network with TensorFlow \cite{tensorflow} on an Nvidia GeForce GTX Titan X GPU. Note that the QP solver is implemented with the TensorFlow layer, \textit{i.e.}, {\tt matrix\_solve\_ls}, for solving a system of linear equations in the least squares sense.

Our DeepGLR is compared with the following approaches: (a)\,CBM3D dedicated for Gaussian noise removal on color image \cite{dabov2007image}\footnote{For testing with CBM3D, we estimate the equivalent noise variances using the ground-truth and the noisy images.}; (b)\,MC-WNNM designed for real image noise removal \cite{xu2017multi}; (c)\,Noise clinic \cite{lebrun2015noise} also designed for real image noise removal; and (d)\,CDnCNN \cite{zhang2017beyond}, a data-driven approach trained with the same dataset as ours. Evaluation results are shown in Table \ref{tab:renoir_comp}, where DeepGLR outperforms competing schemes by a range of 0.17--6.88 dB. 
More visual results are demonstrated in Figure\,\ref{fig:renoir35}, where competing schemes fail to fully remove the noise, while DeepGLR is more satisfactory.
To see the gap between training and testing, performance on the training set is also measured as shown in columns CDnCNN\,(Train) and DeepGLR\,(Train) in Table \ref{tab:renoir_comp}, where CDnCNN excels in training set but not in testing set indicating a strong overfitting. 
This is because: 
\begin{enumerate}[(i)]
\item Only \emph{5 images} are available for training in this experiment, letting CDnCNN strongly \emph{overfit} to the training data. However, our DeepGLR is less sensitive to the deficiency of the training data.
\item While CDnCNN is most suitable for Gaussian noise removal (as stated in \cite{zhang2017beyond}), our DeepGLR adaptively learns the suitable graphs to low-pass filter the real noisy image, which weakens the impact of the complex real noise statistics.
\end{enumerate}

\begin{figure*}[t]
\centering
        \subfloat{\includegraphics[width=\realwidth]{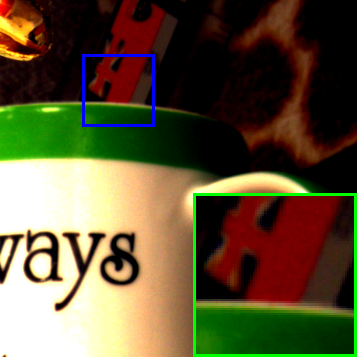}}\hspace{\gauinter}
        \subfloat{\includegraphics[width=\realwidth]{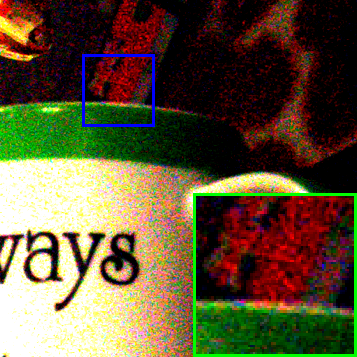}} \hspace{\gauinter}
        % \subfloat[CBM3D]{\includegraphics[width=\realwidth]{}}\\
        \subfloat{\includegraphics[width=\realwidth]{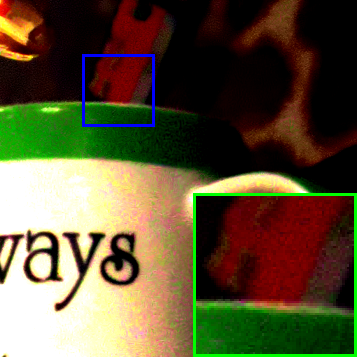}}\hspace{\gauinter}
        \subfloat{\includegraphics[width=\realwidth]{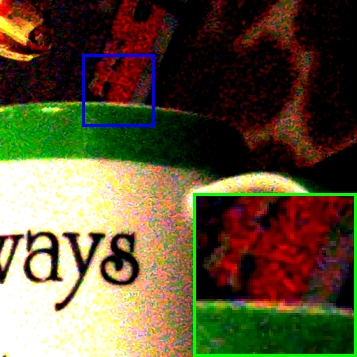}}\hspace{\gauinter}
        \subfloat{\includegraphics[width=\realwidth]{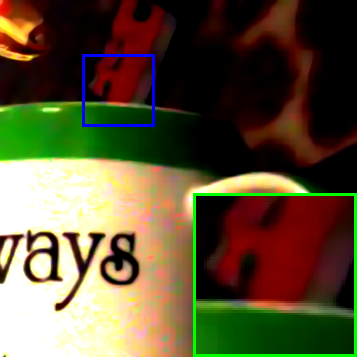}}\\
        \vspace{-10pt}
        \setcounter{subfigure}{0}
        % \vspace{-12pt}
        \subfloat[Ground-truth]{\includegraphics[width=\realwidth]{figures/ge2_gt_proc.png}}\hspace{\gauinter}
        \subfloat[Noisy]{\includegraphics[width=\realwidth]{figures/ge2_noisy_proc.png}}\hspace{\gauinter}
        % \subfloat[CBM3D]{\includegraphics[width=\realwidth]{figures/ge2_cbm3d_proc.png}}\\
        \subfloat[Noise Clinic]{\includegraphics[width=\realwidth]{figures/ge2_clinic_proc.png}}\hspace{\gauinter}
        \subfloat[CDnCNN]{\includegraphics[width=\realwidth]{figures/ge2_dncnn_proc.png}}\hspace{\gauinter}
        \subfloat[DeepGLR]{\includegraphics[width=\realwidth]{figures/ge2_deepglr_proc.png}}
        
 \caption{The proposed DeepGLR trained for AWGN denoising generalizes well to real image denoising.}
 \label{fig:gen}
 \vspace{-8pt}
\end{figure*}

To better understand DeepGLR, we also consider several of its variant: (a)\,The pre-filtering network ${\rm CNN}_{\widehat{\cal Y}}$ is removed, we call the resulting method DeepGLR-PR (``PR'' stands for pre-filter removed); (b)\,${\rm CNN}_{\widehat{\cal Y}}$ is replaced by CDnCNN, the resulting method is called DeepGLR-PC (``PC'' stands for pre-filter with CDnCNN); (c)\,${\rm CNN}_{\bf F}$ is removed, and directly use the output of ${\rm CNN}_{\widehat{\cal Y}}$ as exemplars for graph construction, this scheme is referred to as DeepGLR-FR (``FR'' stands for ${\rm CNN}_{\bf F}$ removed); (d)\,GLRNet. 
Evaluations are provided in Table \ref{tab:ablation}, where DeepGLR-PC provides similar performance as DeepGLR, suggesting the GLR layer can perform effective denoising, and adding extra layers to ${\rm CNN}_{\bf F}$ is of little use. 
Moreover, DeepGLR-PC ($33.03$\,dB) can be regarded as a CDnCNN module ($32.79$\,dB) with GLR as the post-processing, indicating that GLR boosts CDnCNN's performance. 
Apart from DeepGLR-PC, the others provide less satisfactory results, which is consistent with results in Figure\,\ref{fig:renoir35}. 
Without the module for exemplar learning (DeepGLR-FR), DeepGLR cannot capture the underlying image structure; without pre-filtering, the GLR layer has limited effect (DeepGLR-PR); without iterative filtering, one GLRNet alone cannot fully remove real noise with complicated statistics. 
In light of this, DeepGLR stands as a composite of modules, each playing an irreplaceable role.

\begin{table}[t]
  \centering 
  \caption{Evaluation of different variants of DeepGLR for real image denoising. The best two results for each metric are highlighted in boldface.}
  \scriptsize
    \begin{tabular}{c||c||c|c|c|c}  
    \hline
    \multirow{3}[4]{*}{Metric} & \multirow{3}[4]{*}{DeepGLR} & \multicolumn{4}{c}{Variant of DeepGLR} \\
\cline{3-6}          &       & DeepGLR & DeepGLR & DeepGLR & GLRNet\\
 &       & -PR & -PC & -FR & \\
\cline{3-6}
    \hline
    \hline
    PSNR  & \textbf{32.9593} & 32.33 & \textbf{33.025} & 32.51 & 32.50 \\
    \hline
    SSIM  & \textbf{0.8634} & 0.8534 & \textbf{0.8637} & 0.8552 & 0.8599 \\
    \hline
    \end{tabular}%
  \label{tab:ablation}%
  \vspace{-9pt}
\end{table}%

\subsection{Cross-Domain Generalization}
\label{ssub:cross}
We hereby evaluate the robustness of our approach in terms of its \emph{cross-domain generalization ability}. 
Specifically, we evaluate on the RENOIR dataset with DeepGLR and CDnCNN trained for AWGN blind denoising. 

During training, we adopts the same training dataset as CDnCNN, \textit{i.e.}, 432 images from BSDS500, and covers the noise level range $\sigma \in [0,55]$. The same network structure, optimizer and learning rate settings are used for AWGN denoising as for real noise removal detailed in Section\,\ref{ssec:samlldata}.

We first evaluate AWGN blind denoising where the results, in terms of PSNR and SSIM, are shown in Table \ref{tab:gaussian}. 
The rest 68 images from BSDS500 are adopted for evaluation with noise standard deviations 15, 25, 50 with CBM3D as the baseline method. DeepGLR has results \emph{on par with} CDnCNN, both of them surpasses CBM3D by a large margin. 
The results of OGLR \cite{pang2017graph} are also shown in Table \ref{tab:gaussian}. 

More importantly, we evaluate the performance of DeepGLR and CDnCNN on RENOIR dataset to investigate their \textbf{cross-domain generalization} ability.
For comparison, we include noise clinic \cite{lebrun2015noise} designed for real noise removal as a baseline method. 
Objective performance, in terms of PSNR and SSIM, are listed in Table\,\ref{tab:crossdomain}.
We see that, DeepGLR has a PSNR performance of 30.10\,dB, \textit{outperforming CDnCNN by 5.74\,dB, and noise clinic by 1.87\,dB}. 
Nevertheless, CDnCNN provides better results in Gaussian noise removal as shown in Table \ref{tab:gaussian}.
This indicates that CDnCNN is \textbf{strongly overfitted} to the case of Gaussian noise removal and fails to generalize to real noise, while DeepGLR provides satisfactory denoising results.

Subjective results demonstrated in Figure\,\ref{fig:gen} show the denoising results of two image fragments from the RENOIR dataset. Noise clinic still has noticable noise unremoved, and CDnCNN almost preserves most noise. 
In contrast, our DeepGLR provides the best visual quality, removing the noise while preserving the sharp edge details. 
The strong domain generalization stems from robust exemplar learning module in capturing the intrinsic image structure with presence of complex noise, boosting the denoising performance through the GLR layer in recovering the clean image.

\begin{table}[htbp]
  \centering \scriptsize
  \caption{Average PSNR (dB) and SSIM values for Gaussian noise removal.}
    \begin{tabular}{c||c|c|c|c}
    \hline\bigstrut[t]
    \multirow{2}[4]{*}{\hspace{-4pt}Noise\hspace{-4pt}} & \multicolumn{4}{c}{Method (PSNR / SSIM)} \bigstrut[t]\\
\cline{2-5}          & CBM3D & CDnCNN & OGLR & DeepGLR \bigstrut[t]\\
    \hline
    \hline
    15    &  33.49 / 0.9216 & 33.80 / 0.9268 & 33.52 / 0.9198 & 33.65 / 0.9259 \bigstrut\\
    \hline
    25    &  30.68 / 0.8675 & 31.13 / 0.8799 & 30.79 / 0.8661 & 31.03 / 0.8797 \bigstrut\\
    \hline
    50    &  27.35 / 0.7627 & 27.91 / 0.7886 & 27.84 / 0.7755 & 27.86 / 0.7924 \bigstrut\\
    \hline
    \end{tabular}%
  \label{tab:gaussian}%
  \vspace{-9pt}
\end{table}%

\begin{table}[htbp]
  \centering\scriptsize
  \caption{Evaluation of cross-domain generalization for real image denoising. The best results are highlighted in boldface.}
    \begin{tabular}{c|c||c|c|c}
    \hline
    \multirow{2}[4]{*}{Metric} & \multirow{2}[4]{*}{Noisy} & \multicolumn{3}{c}{\vspace{-1.0pt}$\phantom{\hat{I}}\mathop{\textrm{Method} }\limits_{\phantom{.}}\phantom{\hat{I}}$} \bigstrut\\
\cline{3-5}          &   & Noise Clinic & CDnCNN & DeepGLR  \bigstrut[t]\\
    \hline
    \hline
    PSNR  & 20.36   & 27.43 & 24.36& \textbf{30.10} \bigstrut[t]\\
    \hline
    SSIM & 0.1823   &0.6040 & 0.5206& \textbf{0.8028}  \bigstrut[t]\\
    \hline
    \end{tabular}%
  \label{tab:crossdomain}%
  \vspace{-8pt}
\end{table}%

% \begin{table}[htbp]
%   \centering\scriptsize
%   \caption{Evaluation of cross-domain generalization for low-light image denoising. The best results are highlighted in boldface.}
%     \begin{tabular}{c|c||c|c|c|c}
%     \hline
%     \multirow{2}[4]{*}{Metric} & \multirow{2}[4]{*}{Noisy} & \multicolumn{4}{c}{\vspace{-1.0pt}$\phantom{\hat{I}}\mathop{\textrm{Method} }\limits_{\phantom{.}}\phantom{\hat{I}}$} \bigstrut\\
% \cline{3-6}          &       & CBM3D & Noise Clinic & CDnCNN & CDeepGLR  \bigstrut[t]\\
%     \hline
%     \hline
%     PSNR  & 20.36  & 26.08 & 28.01  & 24.36 & \textbf{29.88 } \bigstrut[t]\\
%     SSIM Y & 0.5198  & 0.8698  & 0.7863  & 0.6562& \textbf{0.8826 } \\
%     SSIM R & 0.2270  & 0.6293  & 0.6505 & 0.4640& \textbf{0.7872 } \\
%     SSIM G & 0.4073  & 0.8252  & 0.6836 & 0.6651& \textbf{0.8690 } \\
%     SSIM B & 0.1823  & 0.5633  & 0.5826 & 0.4328& \textbf{0.7195 }  \bigstrut[b]\\
%     \hline
%     \end{tabular}%
%   \label{tab:crossdomain}%
% \end{table}%

%------------------------------------------------------------------------
\section{Conclusion}\label{sec:conclusion}

In this work, we incorporate graph Laplacian regularization into a deep learning framework for real image noise removal. 
Given a corrupted image, it is first fed to a CNN, then neighborhood graphs are constructed from the CNN outputs on a patch-by-patch basis. 
% Using graph Laplacian regularization, the image can be denoised on a patch-by-patch basis. 
The graph construction and the denoising process is fully differentiable, hence the overall pipeline can be end-to-end trained. 
We demonstrate the robustness of the proposed framework---DeepGLR---for real noise removal, from two different aspects.
Firstly, it demonstrates higher immunity to overfitting while training with small dataset. 
Moreover, it manifests strong cross-domain generalization ability when training and testing data have different statistics.

\newpage
{\small
\bibliographystyle{ieee}

}

%\section*{Appendix}
%\label{sec:appendix}
%\input{glr_back}

\end{document}